\newcommand{\reals}{\mathbb{R}}
\newcommand{\lemref}[1]{Lemma~\ref{#1}}
\newcommand{\thmref}[1]{Theorem~\ref{#1}}
\newcommand{\plh}{{\mkern-1.5mu\times\mkern-1.5mu}}
\DeclareMathOperator*{\E}{\mathbb{E}}
\newtheorem{lemma}{Lemma}
\newtheorem{claim}{Claim}
\newtheorem{theorem}{Theorem}
\newtheorem{definition}{Definition}
\newcommand{\BlackBox}{\rule{1.5ex}{1.5ex}}  % end of proof
\newenvironment{proof}{\par\noindent{\bf Proof\ }}{\hfill\BlackBox\\[2mm]}
\begin{document}

\newcommand{\todo}[1]{\textcolor{red}{TODO: #1}\PackageWarning{TODO:}{#1!}}

%\title{Fast Face Recognition with Multi-Batch}
\title{Learning a Metric Embedding \\
  for Face Recognition\\
  using the Multibatch Method}

\author{
Oren Tadmor \and
Yonatan Wexler\and
Tal Rosenwein \and
Shai Shalev-Shwartz\and
Amnon Shashua
}
\date{{\small\texttt{firstname.lastname@orcam.com}}\\
  \vspace{5pt}
  Orcam Ltd., Jerusalem, Israel}
\maketitle

\begin{abstract}
  This work is motivated by the engineering task of achieving a near
  state-of-the-art face recognition on a minimal computing budget
  running on an embedded system.  Our main technical contribution
  centers around a novel training method, called Multibatch, for
  similarity learning, i.e., for the task of generating an invariant
  ``face signature'' through training pairs of ``same'' and
  ``not-same'' face images. The Multibatch method first generates
  signatures for a mini-batch of $k$ face images and then constructs
  an unbiased estimate of the full gradient by relying on all $k^2-k$
  pairs from the mini-batch. We prove that the variance of the
  Multibatch estimator is bounded by $O(1/k^2)$, under some mild
  conditions. In contrast, the standard gradient estimator that relies
  on random $k/2$ pairs has a variance of order $1/k$. The smaller
  variance of the Multibatch estimator significantly speeds up the
  convergence rate of stochastic gradient descent.  Using the
  Multibatch method we train a deep convolutional neural network that
  achieves an accuracy of $98.2\%$ on the LFW benchmark, while its
  prediction runtime takes only $30$msec on a single ARM Cortex A9
  core. Furthermore, the entire training process took only 12 hours on
  a single Titan X GPU.
\end{abstract}

\section{Introduction}

Face representation through a deep network embedding is considered the
state of the art method for face verification, face clustering, and
recognition. The deep network is responsible for mapping the raw image
of a detected face, typically after an alignment phase for pose
correction, into a ``signature'' vector such that signatures of
two pictures of the same person have a small distance while
signatures of images of different individuals have a large
distance. 

The various approaches for deep network embeddings differ along three
major attributes: first, and foremost, by the way the loss function is
structured. Typically, the loss can be represented by the expectation
of all pairs of ``same'' and ``not same'' images, or more recently as
proposed by~\cite{Schroff2015FaceNet} as an expectation over triplets
of images consisting of two ``same'' and a ``not same'' image. The second
attribute is the network architecture: whether or not it assumes an
alignment pre-process, whether it has locally connected layers
\cite{Taigman2014deepface}, whether it is structured as a conventional
concatenation of convolution and pooling
\cite{parkhi15deep,Zeiler2015} or subsumption architecture
\cite{Schroff2015FaceNet}. The third attribute has to do with the use
of a classification layer~\cite{Taigman2014deepface} trained
over a set of known face identities. The signature vector is then taken
from an intermediate layer of the network and used to generalize
recognition beyond the set of identities used in training.

Our goal in this work is to design a deep network embedding for face
representation that is highly optimized in run-time performance, has a
very high precision --- close to the best, yet very expensive,
networks and can be trained overnight using reasonable computing
resources while using a training set of a manageable
size. Specifically, using a training set of 2.6M face $112\plh 112$
images, running overnight on a single Titan X GPU card, our network
has achieved an accuracy of $98.2\%$ on the LFW benchmark
(significantly higher than the best 2014 performance of
\cite{Taigman2014deepface} at 1/5000 of the running time) and it runs
end-to-end using $41M$ FLOPs. This amounts to a runtime of $30$ms on a
single ARM Cortex A9 core (i.mx6 solo-light microprocessor by NXP,
running at 1 GHz).

We train our network using the objective that requires the signatures
of all ``same'' pairs to be below a global threshold while all the
signatures of ``not same'' pairs should be above the global
threshold. In Section~\ref{sec:hard} we show that fulfilling this
objective is {\sl harder\/} than training a multiclass categorization
network over a set of known identities (the latter being the method
used by~\cite{Taigman2014deepface} as a surrogate objective). While
solving a multiclass categorization is an ``easier'' task, it requires
a larger training set because of the significant increase in the
number of parameters due to the large output layer (consisting of
thousands of nodes, one per face identity).

Our main technical contribution centers around a better mechanism for
constructing gradient estimates for the SGD method. The estimator,
called Multibatch, is described in Section~\ref{sec:main}. We prove
that the variance of the Multibatch estimator decreases (under some
mild conditions) as $O(1/k^2)$. In contrast, the variance of the
standard gradient estimate is order of $1/k$. The smaller variance of
the Multibatch estimator, which comes with nearly no extra
computational cost per iteration, is translated to a faster
convergence of SGD, both in theory and in practice. This dramatic
speed-up enables us solve the hard problem of metric embedding without
relying on surrogates such as the multiclass surrogate of
\cite{Taigman2014deepface} or the triplet surrogate of \cite{Schroff2015FaceNet}.

From the engineering standpoint, we propose a streamlined network
which combines the pre-processing step of face alignment into one
network that can be optimized end-to-end.
Specifically, the alignment pre-process is necessary for reducing the
variability of the problem and thereby both reducing the size of the
required training set and increasing the accuracy of the
classifier. For the sake of unified implementation and optimization of
code on the embedded platform we trained a small convolutional network
with $4.8M$ FLOPs to produce the warping parameters as given by the
state-of-the-art face alignment algorithm of~\cite{kazemi2014}. The
result is a system comprised solely of convolutional and linear
transformations using a pipeline starting from a pre-process deep
network, a warping transformation followed by the deep network
embedding responsible for mapping the warped face image to a signature
vector. The entire pipeline takes $41M$ FLOPs and runs on
$112\plh 112$ face images with a runtime of $30$ms on a single Cortex
A9 core running at $1GHz$.

% In section \ref{sec:hard} we describe why practitioners tend to devise
% a classification network for face recognition rather than a similarity
% network; in section \ref{sec:main} we present the main result of this
% work about the Multibatch method showing that the variance of the
% gradient estimation behaves as inversely proportional to the square of
% the mini-batch size thereby accelerating convergence; and in section
% \ref{sec:arch} we present the similarity learning deep network
% architecture and results on the LFW benchmark.

\section{Learning a Metric}
\label{sec:hard}

\begin{figure}[t!]
  \centering
  \includegraphics[clip, trim=0.5cm 6.5cm 0.5cm 6.5cm, width=0.8\textwidth]{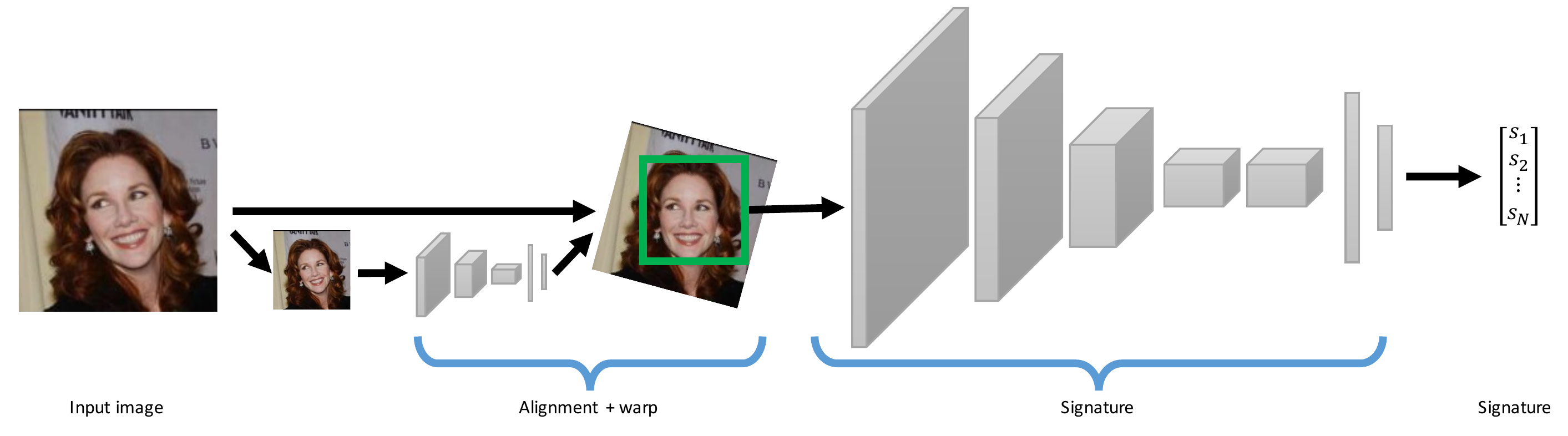}
  %\fbox{\includegraphics[trim=0.5cm 6.5cm 0.5cm 6.5cm, width=0.8\textwidth]{figures/faces_pipe.pdf}}
  \caption{Overview of joint alignment and signature generation
    network}
  \label{fig:faces-pipe}
\end{figure}

We seek to learn a transformation $f_w$, parameterized by $w$, from
input space into $\mathbb R^d$.  The resulting vectors should agree
with ``same-not-same'' criterion where two samples from the same class
should be close up to some learned threshold $\theta$ and vectors from
differing classes should be farther than $\theta$ under a norm of
choice.  Namely, for two samples $x,x'$ and their labels
$y,y'\in [k]$:
\begin{eqnarray}
  \label{eq:SNS}
  y=y'      & \ \ \ \Longrightarrow\ \ \  & \|f_w(x) - f_w(x') \|^2 <\theta -1\nonumber\\
  y\neq y'  & \ \ \ \Longrightarrow\ \ \  & \| f_w(x) - f_w(x') \|^2 >\theta +1
\end{eqnarray}
The definition above satisfies our needs as it finds a metric
embedding and a single global threshold.  It enables any open-dictionary
scenario, such as the case of face recognition.  The existence of a
global threshold is important as the typical scenario involves
seeing many faces that are not known (e.g.\ people walking down the street).

We would like to optimize the loss over the parameters $w$ of the
network, including the value of the threshold $\theta$.  Given a
training set ${\{(x_i,y_i)\}}_{i=1}^m$ we define the per-pair loss
function:
\begin{equation}
{l}(w,\theta;\ x_i, x_j,y_{ij}) = {\left(1-y_{ij}\left(\theta -
        \|f_w(x_i)-f_w(x_j)\||^2\right)\right)}_+
  \label{eq:pair-loss}
\end{equation}
where $y_{ij}\in\{\pm 1\}$ indicates whether the objects $x_i,x_j$ are of the
same class or not, and $(u)_+:=\mathrm{max}(u,0)$.
The global loss over the whole set is the average loss over all the pairs:
\begin{equation}
  \mathrm{L}(w, \theta) =
  \frac{1}{m^2-m}\sum_{i\neq j \in [m]}{l}(w,\theta;\ x_i, x_j,y_{ij}) ~.
  \label{eq:objective}
\end{equation}
Learning a ``same/not-same'' classifier based on the above approach has
been studied extensively. See for example
\cite{xing2003distance,shalev2004online}.  However, in the context of
face recognition using deep networks, most previous works avoid using
$L(w,\theta)$ directly and instead define an over-subscribed
multiclass problem.  For example \cite{Taigman2014deepface} used a
convolutional neural net to learn both $w$ and a matrix-vector pair
$(W,b)$ such that $W\cdot f_w(x_i) + b$ is a vector whose largest
coordinate is the correct class, $y_i\in [N]$, where $N$ is the number
of different labeled identities in the training set ($N=4030$ was used
in \cite{Taigman2014deepface}). Then, a multiclass loss function is
used, for example the multiclass hinge-loss:
\begin{equation}
  \mathrm{MC}(w, W,b) = \sum_i \, \max_{t \in [N]}
  \left(1[t \neq y_i] + W_t\cdot f_w(x_i) + b_t - W_{y_i} \cdot
    f_w(x_i) - b_{y_i} \right) ~,
  \label{eq:MCLoss}
\end{equation}
where $W_t$ is the $t$'th row of the matrix $W$. Another popular
multiclass loss is the logistic loss (which was used in
\cite{Taigman2014deepface}).  The embedding $f_w(x)$ is the so-called
last ``representation layer'', namely, the layer preceding the output
layer.  In \cite{Taigman2014deepface} the representation layer has
$4096$ nodes.  In other words, the embedding $f_w$ is achieved
indirectly through a multi-class problem with $N$ different labels
with the tradeoff of adding the matrix $W$ with $4096\plh 4030$
parameters, which in fact is the majority of parameters of the entire
network.  The multiclass approach is therefore highly over-subscribed
compared to the metric learning approach.  We show next that indeed
the metric learning approach is ``harder'' than the multiclass
approach, explaining the motivation for learning embeddings in such an
indirect manner.

\begin{figure}
  \centering
  \begin{minipage}[b]{0.40\linewidth}
  \centering
  \includegraphics[trim=0.5cm 2cm 0.5cm 4cm, width=0.8\textwidth]{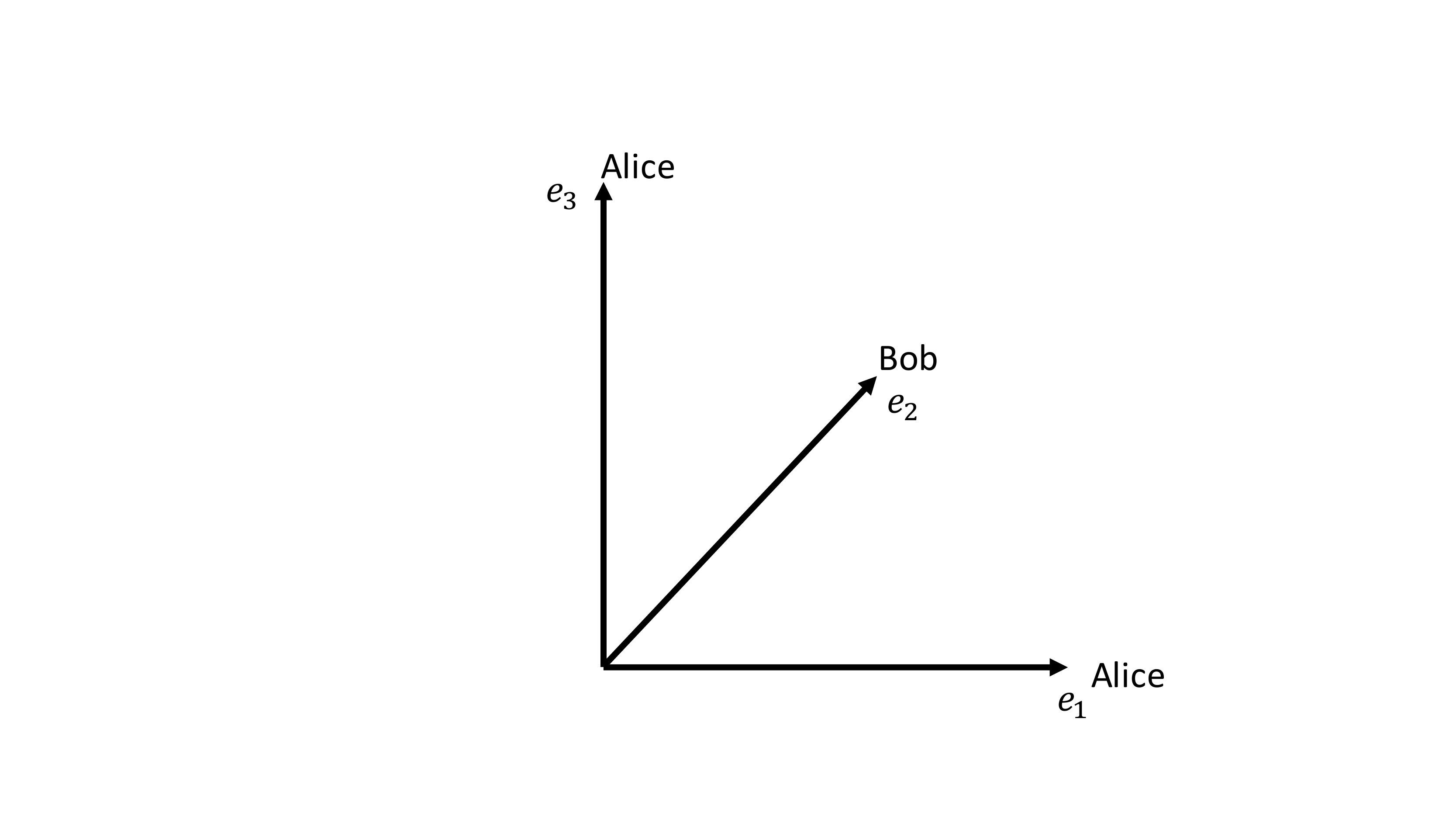}
  %\fbox{\includegraphics[trim=0.5cm 2cm 0.5cm 4cm, width=0.8\textwidth]{figures/alice-bob-alice.pdf}}
  \caption{\small Why multiclass is easier than same-not-same: In this
    example, three samples of two identities are mapped to the
    canonical vectors $e_i$ where Alice maps to $e_1$, $e_3$ and Bob
    maps to $e_2$.  It is easy to see that $\|e_1-e_2\|=\|e_1-e_3\|$
    so they are all equidistant.  Still, a multiclass classifier is
    easy with $W_\mathrm{Alice} = e1+e3$ and $W_\mathrm{Bob} = e2$. }
    \label{fig:multiclass-is-harder}

  \end{minipage}
  \hspace{0.05\linewidth}
  \begin{minipage}[b]{0.50\linewidth}
  \centering
  \includegraphics[trim=0.5cm 0.5cm 0.5cm 0.5cm, width=0.6\textwidth]{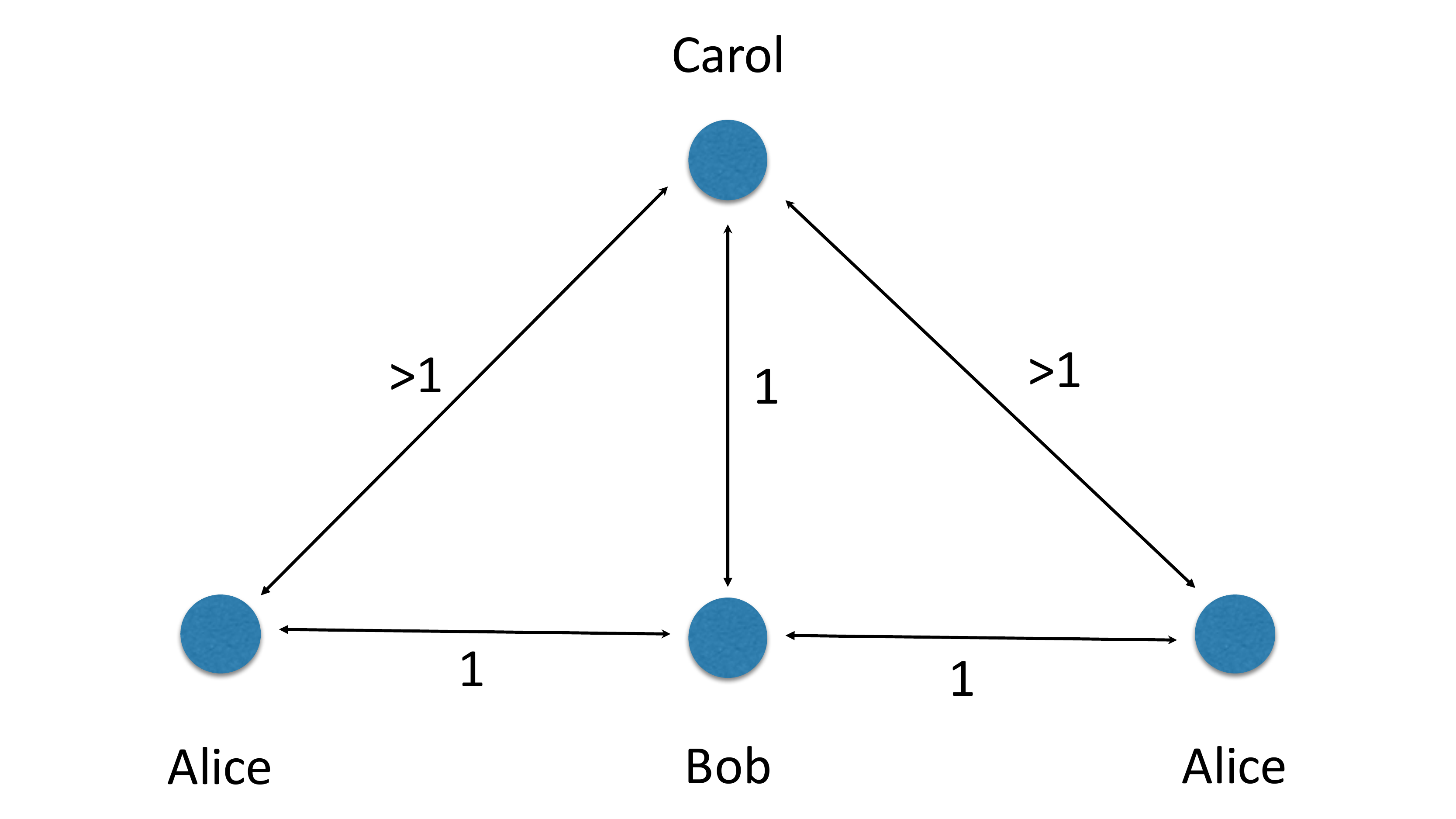}
  %\fbox{\includegraphics[trim=0.5cm 0.5cm 0.5cm 0.5cm, width=0.6\textwidth]{figures/alice-bob-carol.pdf}}
  \caption{\small An ambiguous configuration.  In this set, choosing
    either Alice-Bob pair will not improve the solution as both Alices
    will try to move closer to each-other while Bob will push them
    apart.  Choosing Alice-Carol will not result an improvement since
    the distance is already larger than the margin.  Only choosing
    Carol-Bob will improve the result.  When choosing only pairs, the
    chance of choosing Carol-Bob is 1:3}
  \label{fig:alice-bob-carol}
  \end{minipage}
\end{figure}

\begin{claim}
Metric learning is harder than multiclass in the
following sense: Given $w,\theta$ for which Eq.~\ref{eq:objective} is
zero, there exists $(W,b)$ for which the value of Eq.~\ref{eq:MCLoss}
at $w,W,b$ is zero.  However, there exists $w,W,b$ such that
Eq.~\ref{eq:MCLoss} is zero but such that for every $\theta$, the
value of Eq.~\ref{eq:objective} at $w,\theta$ is substantially greater
than zero.
\end{claim}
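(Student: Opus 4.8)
For the first direction I would start from the hypothesis that $\mathrm{L}(w,\theta)=0$, which by definition of the per-pair hinge loss means that for every same-class pair $i,j$ we have $\|f_w(x_i)-f_w(x_j)\|^2 \le \theta - 1$ and for every different-class pair $\|f_w(x_i)-f_w(x_j)\|^2 \ge \theta + 1$. The natural choice is to take $W_t = c\cdot\mu_t$, where $\mu_t = \frac{1}{|\{i: y_i = t\}|}\sum_{i: y_i=t} f_w(x_i)$ is the class mean of the embeddings, with a scaling constant $c>0$ to be fixed, and $b$ to absorb the $\|\mu_t\|^2$ terms (i.e. $b_t = -\tfrac{c}{2}\|\mu_t\|^2$ up to constants). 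With this choice $W_t\cdot f_w(x_i) + b_t$ is, up to an additive constant independent of $t$, equal to $-\tfrac{c}{2}\|f_w(x_i)-\mu_t\|^2$, so the multiclass score comparison reduces to comparing the squared distance from $f_w(x_i)$ to its own class mean against the distance to every other class mean. The plan is then to bound $\|f_w(x_i)-\mu_{y_i}\|^2$ from above by $\theta-1$ (it is an average of same-pair distances, each at most $\theta-1$, via convexity of $\|\cdot\|^2$) and to bound $\|f_w(x_i)-\mu_t\|^2$ from below for $t\ne y_i$ by something like $\theta+1$ minus a correction coming from the within-class spread of class $t$; choosing $c$ large enough then makes the margin-$1$ requirement $1[t\ne y_i] + \text{(score}_t) - \text{(score}_{y_i}) \le 0$ hold. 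The slightly delicate point is that $\|f_w(x_i)-\mu_t\|^2$ can be smaller than $\theta+1$ because $\mu_t$ is an interior point, but it is still bounded below (e.g. by a quantity strictly exceeding $\theta-1$ when the two classes are well separated), and scaling $c$ up amplifies that gap past the unit margin. I would present this cleanly by fixing $c$ at the end once the two inequalities are in hand.

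For the second direction I only need to exhibit one configuration, so I would lean on the picture already drawn in Figure~\ref{fig:multiclass-is-harder}: take $d$ at least $3$, one class ("Alice") with two samples mapped to the orthonormal vectors $e_1$ and $e_3$, and a second class ("Bob") with one sample mapped to $e_2$. Set $f_w$ to realize exactly this embedding, $W_{\text{Alice}} = e_1+e_3$, $W_{\text{Bob}} = e_2$, and $b = 0$: then a direct check shows each sample's correct class score exceeds every wrong class score by at least the required unit margin, so $\mathrm{MC}(w,W,b)=0$. For the metric loss, the key observation is that $\|e_1-e_3\|^2 = 2 = \|e_1-e_2\|^2 = \|e_2-e_3\|^2$, so the one same-class pair and the two different-class pairs all have identical squared distance, call it $\rho=2$. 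The same-pair constraint forces $\theta \ge \rho + 1$ while each different-pair constraint forces $\theta \le \rho - 1$, which is impossible; more quantitatively, for any $\theta$ the total hinge loss is at least $\min_\theta\big[(1 - (\theta-\rho))_+ + 2\,(1+(\theta-\rho))_+\big]$, and optimizing over the single scalar $\theta-\rho$ gives a strictly positive constant (minimized at $\theta=\rho$, value $3$, or thereabouts depending on normalization), which is "substantially greater than zero" independent of any parameters.

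**Main obstacle.** The genuinely nontrivial part is the lower bound on $\|f_w(x_i)-\mu_t\|^2$ for $t\ne y_i$ in the first direction: $\mu_t$ is a convex combination of points all far from $f_w(x_i)$, but convexity gives an \emph{upper} bound on distances to a convex combination, not a lower bound, so one cannot simply quote Jensen. I would handle it by writing $\|f_w(x_i)-\mu_t\|^2 = \frac{1}{|C_t|}\sum_{j\in C_t}\|f_w(x_i)-f_w(x_j)\|^2 - \frac{1}{|C_t|}\sum_{j\in C_t}\|f_w(x_j)-\mu_t\|^2$ (the standard bias–variance / parallel-axis identity), where the first term is $\ge \theta+1$ and the second is the within-class variance of class $t$; this variance is itself at most $\theta-1$ by the same-pair constraints applied within $C_t$, so $\|f_w(x_i)-\mu_t\|^2 \ge (\theta+1) - (\theta-1) = 2$. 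Symmetrically $\|f_w(x_i)-\mu_{y_i}\|^2 \le \theta-1$, and if we also arrange (WLOG, by translating/scaling $f_w$, or simply by noting it is implied by a nontrivially separated solution) that $\theta - 1 \ge 0$, the gap between the two is at least $2 - (\theta-1)$, which may be small or negative; but after scaling the scores by $c$ the \emph{difference} of scores scales linearly while the required margin stays at $1$, and since the underlying distance gap is strictly positive whenever the classes are separated (the only regime in which $\mathrm{L}=0$ is interesting), a large enough $c$ closes it. I would make sure to state this scaling argument carefully, since it is the crux of why "harder" is the right word: the multiclass problem has the extra freedom of the parameters $(W,b)$, and here that freedom is exactly what rescues a configuration the metric loss cannot fit.
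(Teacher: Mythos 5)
Your second direction is fine and matches the paper's (the equidistant $e_1,e_2,e_3$ configuration; the exact value of the minimal loss is immaterial, only that it is bounded away from zero). The first direction, however, has a genuine gap, and it sits exactly where you flagged the "slightly delicate point." Your classifier is, after absorbing constants, a nearest-class-\emph{mean} rule: the score of class $t$ is $-\tfrac{c}{2}\|f_w(x_i)-\mu_t\|^2$. The scaling argument with large $c$ can only work if the sign of the gap $\|f_w(x_i)-\mu_t\|^2-\|f_w(x_i)-\mu_{y_i}\|^2$ is positive, and $L(w,\theta)=0$ does \emph{not} guarantee this. By your own parallel-axis identity the gap equals $\bigl[\mathrm{avg}_{j\in C_t}\|f_w(x_i)-f_w(x_j)\|^2-\mathrm{avg}_{j\in C_{y_i}}\|f_w(x_i)-f_w(x_j)\|^2\bigr]-\bigl[V_t-V_{y_i}\bigr]$, where $V_t$ is the within-class spread; the first bracket is at least $2$, but $V_t$ can exceed $2$ when $\theta$ is large. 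Concretely, take $\theta=9$ (same-class squared distances $\le 8$, cross-class $\ge 10$): let class $B$ be a large regular simplex of pairwise squared distance $8$, so $V_B\to 4$ and each vertex is at squared distance $4$ from $\mu_B$; place $a_1$ of class $A$ on the orthogonal axis at squared distance $10$ from every vertex, so $\|a_1-\mu_B\|^2=6$; let the rest of class $A$ be a tight cluster at a point $p$ with $\|a_1-p\|^2=8$, displaced in a fresh orthogonal direction so that $\|p-b_j\|^2=18$. All constraints of Eq.~\ref{eq:SNS} hold, yet $\|a_1-\mu_A\|^2\to 8>6=\|a_1-\mu_B\|^2$: the point is strictly closer to the wrong class mean, so your classifier misclassifies it for every $c>0$ and no rescaling helps.

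The paper avoids this by choosing the bias differently: $b_k=-\tfrac{1}{2|C_k|}\sum_{i\in C_k}\|f_w(x_i)\|^2$ rather than $-\tfrac{1}{2}\|\mu_k\|^2$. With that choice the class-$k$ score becomes (up to a $k$-independent constant) $-\tfrac{1}{2|C_k|}\sum_{i\in C_k}\|f_w(x_i)-f_w(x_j)\|^2$, i.e.\ the negative half \emph{average squared distance to the class members}, not to the class mean; the difference between the two is precisely the within-class variance $V_k$ that breaks your argument. Then the correct-class score is at least $-\tfrac{\theta-1}{2}$ and every wrong-class score is at most $-\tfrac{\theta+1}{2}$, giving the required unit margin directly with $c=1$ and no scaling. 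Your proof is repaired by replacing your $b_t$ with this one (equivalently, adding $\tfrac{c}{2}V_t$ to your bias) and dropping the scaling step entirely.
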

\begin{proof}
Let $w,\theta$ be such that Eq.~\ref{eq:objective} is zero. For every
$k$, let $C_k$ be the set of indices of examples that belong to class
$k$.  Define $W$ to be the matrix such that its $k$'th row is
$W_k = \frac{1}{|C_k|} \sum_{i \in C_k} f_w(x_i)$.  Clearly, for every
$j$
\begin{align*}
&W_k \cdot f_w(x_j) = \frac{1}{|C_k|} \sum_{i \in C_k} f_w(x_i)^\top
f_w(x_j) \\
&= - \frac{1}{2|C_k|} \sum_{i \in C_k} \|f_w(x_i)-f_w(x_j)\|^2
+ \frac{1}{2|C_k|} \sum_{i \in C_k}  \|f_w(x_i)\|^2 + \frac{1}{2} \|f_w(x_j)\|^2
\end{align*}
The last term in the above does not depend on $k$, and therefore it
can be omitted since it does not effect the multiclass loss given in
Eq.~\ref{eq:MCLoss}. Let
$b_k = -\frac{1}{2|C_k|} \sum_{i \in C_k} \|f_w(x_i)\|^2$, and hence
it cancels out the second term. It is left to deal with the first
term. If $j \in C_k$ then $\|f_w(x_i)-f_w(x_j)\|^2 \le \theta - 1$,
hence the first term is at least $-\frac{\theta - 1}{2}$. If
$j \notin C_k$ then $\|f_w(x_i)-f_w(x_j)\|^2 \ge \theta + 1$, hence
the first term is at most $-\frac{\theta+1}{2}$. Plugging into
Eq.~\ref{eq:MCLoss} we obtain that its value is zero.  Finally, to
show that the converse isn't true, consider
Figure~\ref{fig:multiclass-is-harder} which depicts three samples from
two classes which are equidistant but are easily separable. 
\end{proof}
% \paragraph{Proof (if our loss is with Euclidean distance without the square):}
%Let $w,\theta$ be such that Eq.~\ref{eq:objective} is zero. For every $k$, let $C_k$ be the set of
%indices of examples that belong to class $k$.
%Define $W$ to be the matrix such that
%its $k$'th row is $W_k = \frac{1}{|C_k|} \sum_{i \in C_k} f_w(x_i)$.
%Clearly, for every $j$
%\begin{align*}
%&W_k \cdot f_w(x_j) = \frac{1}{|C_k|} \sum_{i \in C_k} f_w(x_i)^\top
%f_w(x_j) \\
%&= - \frac{1}{2|C_k|} \sum_{i \in C_k} \|f_w(x_i)-f_w(x_j)\|^2
%+ \frac{1}{2|C_k|} \sum_{i \in C_k}  \|f_w(x_i)\|^2 + \frac{1}{2} \|f_w(x_j)\|^2
%\end{align*}
%The last term in the above does not depend on $k$, and therefore it
%can be omitted since it does not effect the multiclass loss given in
%Eq.~\ref{eq:MCLoss}. Let $b_k = -\frac{1}{2|C_k|} \sum_{i \in C_k}
%\|f_w(x_i)\|^2$, and hence it cancels out the second term. It is left
%to deal with the first term. If $j \in C_k$ then
%$\|f_w(x_i)-f_w(x_j)\|^2 \le \theta - 1$, hence the first term is at
%least $-\frac{\theta - 1}{2}$. If $j \notin C_k$
%then $\|f_w(x_i)-f_w(x_j)\|^2 \ge \theta + 1$, hence the first term is
%at most $-\frac{\theta+1}{2}$. Plugging into Eq.~\ref{eq:MCLoss} we
%obtain that its value is zero. Finally, to show that the converse
%isn't true, consider Figure~\ref{fig:multiclass-is-harder} which depicts
%three samples from two classes which are equidistant but are easily
%separable. \BlackBox

Our experiments show that optimizing Eqn.~\ref{eq:objective} indeed
converges slowly with SGD.\@ While we later provide a mathematical
analysis of this, let us first consider the typical and intuitive
example in Figure~\ref{fig:alice-bob-carol}.  There, four points are
mapped onto the target space, hence providing $6$ pair-wise
constraints.  As is typical, the classes are mixed and a sample of Bob
lies between two samples of Alice.  It is easy to see that among all
pairs, only the Carol-Bob pair is useful.  Neither Alice-Bob pairs
help as it would push Bob to be closer to another Alice (or have small
effect pushing Bob outwards).  An Alice-Carol pair would not result
any loss as they are already far enough.  Only by considering all
pairs of samples, including Carol, that the correct update step can be
reliably estimated.

The work of \cite{Schroff2015FaceNet} used triplets in order to define
an easier goal where pairs from each class should be closer than pairs
from differing classes.  That objective forgoes one global threshold
and so is easier to solve.  In order to achieve convergence they
suggest a boosting scheme where ``hard but not too hard'' examples are
picked using a separate process.  In the next section we show how to
extend this idea to an arbitrary number of pairs and solve the harder
problem faster.

\section{The Multi-Batch Estimator}
\label{sec:main}

In this section we describe the multi-batch approach for minimizing
$L(w,\theta)$ (as given in Eq. \eqref{eq:objective}) and analyze its main
properties. To simplify the presentation, denote $z = (w,\theta)$. Our
goal is therefore to minimize $L(z)$. We also use the notation
$\ell_{i,j}(z)$ to denote $\ell(w,\theta;x_i,x_j,y_{i,j})$.

The multi-batch method relies on the Stochastic Gradient Descent (SGD)
framework, which is currently the most popular approach for training
deep networks. SGD starts with some initial solution, $z^{(0)}$. At
iteration $t$ of the SGD algorithm, it finds an estimate of the
gradient, $\nabla L(z^{(t-1)})$, and updates $z$ based on the
estimation. The simplest update rule is
$z^{(t)} = z^{(t-1)} - \eta \hat{\nabla}^{(t)}$, where
$\eta \in \reals$ is a learning rate and $\hat{\nabla}^{(t)}$ is the
estimate of $\nabla L(z^{(t-1)})$.

As indicated by the term "stochastic", the estimate of the gradient
is a random vector, and we require that the estimate will be unbiased,
namely that $\E[\hat{\nabla}^{(t)} ] = \nabla L(z^{(t-1)})$. In words,
on average, the update direction is the gradient direction.

We focus on a family of gradient estimates, in which the objective is
being rewritten as an average of $n$ functions,
$L(z) = \frac{1}{n} \sum_{i=1}^n L_i(z)$, and the estimated gradient
is obtained by sampling $i$ uniformly at random from $[n]$ and setting
$\hat{\nabla}^{(t)} := \nabla L_i(z)$. Due to the linearity of the
derivation operator, we clearly have that this yields an unbiased
estimate:
$\E[ \hat{\nabla}^{(t)} ] = \E[ \nabla L_i(z)] = \nabla \E[ L_i(z)] =
\nabla L(z)$.

In our metric learning problem, the objective, $L(z)$, is already
defined as an average of $m^2-m$ functions: for every pair
$j,j' \in [m]$, $j \neq j'$, we can define $L_i(z) =
\ell_{j,j'}(z)$. While this approach gives us an unbiased estimate of
the gradient, the \emph{variance} will be order of $1$. In recent
years, the importance of the variance of SGD algorithms has been
extensively studied. Formally, the variance is defined as follows:
\begin{definition}[variance of gradient estimate]
Let $L(z) = \frac{1}{n} \sum_{i=1}^n
L_i(z)$. The variance of the natural unbiased gradient
estimate is defined as:
\[
\nu^2(z) := \frac{1}{n} \sum_{i=1}^n \| \nabla L_i(z) - \nabla L(z) \|^2  ~.
\]
\end{definition}
Several recent works analyzed the convergence of SGD as a function of
the variance. See for example \cite{moulines2011non,
  shalev2013stochastic, johnson2013accelerating, shalev2016sdca,
  zhao2014accelerating, needell2014stochastic}.  Ghadimi and Lan
\cite{ghadimi2013stochastic} have shown that even for non-convex
problems, the number of iterations required by SGD to converge (in the
sense of finding a point for which the squared norm of the gradient is
at most $\epsilon$) is order of
$\frac{\bar{\nu}^2}{\epsilon^2} + \frac{1}{\epsilon}$, where
$\bar{\nu}^2$ is an upper bound on $\nu^2(z^{(t)})$ for every $t$.

To decrease the variance, a common approach in deep learning is to
rely on mini-batches. Formally, for any $L$ of the form
$L(z) = \frac{1}{n} \sum_{i=1}^n L_i(z)$, we can also rewrite $L$ as
$L(z) = \frac{1}{n^k} \sum_{i_1,\ldots,i_k} L_{i_1,\ldots,i_k}(z)$,
where $L_{i_1,\ldots,i_k}(z) = \frac{1}{k} \sum_{r=1}^k
L_{i_r}(z)$. This leads to the gradient estimate based on a
mini-batch: sample $i_1,\ldots,i_k$ i.i.d. from $[n]$ and set the
gradient to be the average value of $\nabla L_{i_j}(z)$, where
averaging is over $j \in [k]$.  Since the random vector is now an
average of $k$ i.i.d. random vectors, the variance decreases by a
factor of $1/k$.

Getting back to our metric learning problem, estimating the gradient
with a mini-batch of size $k/2$ requires the sampling of $k/2$ pairs
of instances and calculating the loss for every pair. We observe that
the main computation time for this operation is in the ``forward'' and
``backward'' calculations performed by the signature network over the
$k$ instances. The time requires to calculate the gradient of the loss
given the signatures is negligible.  This raises a natural question:
if we have a budget of performing a forward and backward pass on $k$
instances, can we find another unbiased estimate of the gradient whose
variance is strictly smaller than order of $1/k$ ? The multibatch
method aims at achieving this goal.

\iffalse
\begin{figure}
  \centering
  %\includegraphics[width=0.4\textwidth]{figures/minibatch_vs_multibatch.pdf}
  \newlength\figureheight
  \newlength\figurewidth
  \setlength\figureheight{.4\textwidth}
  \setlength\figurewidth{.4\textwidth}
  \input{pairs-vs-all-pairs.tex}
  \caption{\small Minibatch approach takes a random set of $k$ samples
    and uses them as $\frac{k}{2}$ pairs shown here with black
    diamond.  The Multibatch uses all $\frac{k^2-k}{2}$ pairs, shown
    here with red circles.  The analysis shows that using all pairs
    results a $k$-fold speedup in convergence. Typically $k=128$ or
    $k=256$}
  \label{fig:minibatch-vs-multibatch}
\end{figure}
\fi

Intuitively, once we have calculated the signatures of $k$ instances,
it would be wasteful not to use the loss on all the $k^2-k$ pairs of
different instances. The multibatch estimate does exactly that --- it
samples $k$ instances and defines the gradient to be the gradient of
the average loss on all $k^2-k$ pairs of different instances.
Formally, let $\Pi$ denote the set of permutations over $[m]$, and for
every $\pi \in \Pi$ we define
\[
L_\pi(z) = \frac{1}{k^2-k} \sum_{i \neq j \in [k]}
\ell_{\pi(i),\pi(j)}(z) ~.
\]
The multibatch estimate samples $\pi$ uniformly at random from $\Pi$
and returns $\nabla L_\pi(z)$ as the gradient estimate.
%An illustration of the mini-batch vs multibatch approaches is given in Figure~\ref{fig:minibatch-vs-multibatch}.
The following theorem shows that the multibatch estimate is an
unbiased estimate and that the variance of the multibatch estimate can
be order of $1/k^2$ under mild conditions.
\begin{theorem}\label{thm:mainVarianceMultibatch}~
  \begin{itemize}
  \item $\E_\pi [ \nabla L_\pi(z)] = \nabla L(z)$
  \item For every $i,r$, denote
    $\bar{A}^{(r)}_i = \E_{j \neq i} (\nabla_r \ell_{i,j}(z) - \nabla_r
    L(z))$. Then,
    \[
      \E_\pi \| \nabla L_\pi(z) - \nabla L(z)\|^2 \le
      \frac{1}{k^2-k} \E_{i \neq j} \| \nabla \ell_{i,j}(z)
      - \nabla L(z)\|^2 +  O\left( \frac{1}{k} \right) \, \sum_r
      \E_{i \sim [m]} {(\bar{A}^{(r)}_i)}^2 ~.
    \]
  \end{itemize}
\end{theorem}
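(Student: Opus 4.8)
The plan is to establish unbiasedness by a one-line symmetry argument and to obtain the variance bound by expanding the squared norm and grouping terms according to how the four indices overlap. For the first bullet: for any fixed $i \neq j$ in $[k]$, the ordered pair $(\pi(i),\pi(j))$ is uniform over ordered pairs of distinct elements of $[m]$, so $\E_\pi \ell_{\pi(i),\pi(j)}(z) = \frac{1}{m^2-m}\sum_{a\neq b}\ell_{a,b}(z) = L(z)$; averaging over the $k^2-k$ pairs and using linearity of $\nabla$ gives $\E_\pi \nabla L_\pi(z) = \nabla L(z)$.

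For the variance, I would write $g_{i,j} := \nabla\ell_{i,j}(z) - \nabla L(z)$, so $\E_{i\neq j} g_{i,j} = 0$, and note $\ell_{i,j}=\ell_{j,i}$ (the loss in~\eqref{eq:pair-loss} is symmetric), hence $g_{i,j}=g_{j,i}$; thus for each $i$ the vector $\bar A_i := \E_{j\neq i} g_{i,j}$, whose $r$-th coordinate is $\bar A^{(r)}_i$, is the natural ``average residual over all pairs touching $i$'', and $\E_i \bar A_i = 0$. Since $\nabla L_\pi(z) - \nabla L(z) = \frac{1}{k^2-k}\sum_{i\neq j\in[k]} g_{\pi(i),\pi(j)}$,
\[
\E_\pi\|\nabla L_\pi(z) - \nabla L(z)\|^2 = \frac{1}{(k^2-k)^2}\sum_{\substack{i\neq j\in[k]\\ i'\neq j'\in[k]}} \E_\pi \inner{g_{\pi(i),\pi(j)},\, g_{\pi(i'),\pi(j')}} .
\]
The key observation is that $\E_\pi \inner{g_{\pi(i),\pi(j)}, g_{\pi(i'),\pi(j')}}$ depends only on the pattern of coincidences among the four slot-indices $i,j,i',j'$: if they take $t$ distinct values, the restriction of the uniform random $\pi$ to those slots is a uniform random injection into $[m]$, so their images form a uniform random ordered $t$-tuple of distinct elements of $[m]$.

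Next I would partition the double sum by this pattern: (a) all four slots distinct; (b) exactly one shared slot (four sub-patterns, according to which of $i,j$ equals which of $i',j'$); (c) $\{i,j\}=\{i',j'\}$, split into $(i',j')=(i,j)$ and $(i',j')=(j,i)$. The counts are falling factorials of $k$: $k(k-1)$ for each case in (c), $k(k-1)(k-2)$ for each sub-pattern in (b), and $k(k-1)(k-2)(k-3)$ for (a), and one checks these sum to $(k^2-k)^2$. After dividing by $(k^2-k)^2$: the case $(i',j')=(i,j)$ contributes exactly $\tfrac{1}{k^2-k}\,\E_{i\neq j}\|g_{i,j}\|^2$, i.e.\ the first term of the bound; the case $(i',j')=(j,i)$ contributes $\tfrac{1}{k^2-k}\,\E_{i\neq j}\inner{g_{i,j},g_{j,i}}$, which is at most the previous term by Cauchy--Schwarz; the four sub-patterns of (b) carry a coefficient of order $1/k$; and case (a) carries a coefficient of order $1$ and equals $T_0 := \E_{(a,b,c,d)\text{ distinct}}\inner{g_{a,b},g_{c,d}}$.

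Two steps remain. First, rewrite the order-$1/k$ terms via $\bar A_i$: peeling off the $c=b$ contingency,
\[
\E_{(a,b,c)\text{ distinct}}\inner{g_{a,b},g_{a,c}} = \tfrac{m-1}{m-2}\,\E_a\|\bar A_a\|^2 - \tfrac{1}{m-2}\,\E_{a\neq b}\|g_{a,b}\|^2,
\]
and analogously for the other three sub-patterns (the symmetry $g_{i,j}=g_{j,i}$ makes them all equal), so the whole of regime (b) contributes $O(1/k)\,\E_i\|\bar A_i\|^2$ up to an $O(1/k^2)$ correction. Second — and this is the main obstacle — the term $T_0$ has an $O(1)$ coefficient, so a crude bound is useless; instead I would expand the identity $0 = \|\E_{i\neq j}g_{i,j}\|^2$ using the \emph{same} coincidence-pattern decomposition, now over $[m]$, and solve the resulting linear relation for $T_0$. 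This expresses $T_0$ as a combination of the regime-(b) terms and the $\|g_{i,j}\|^2$, $\inner{g_{i,j},g_{j,i}}$ terms, each damped by an extra factor $1/m \le 1/k$ (resp.\ $1/m^2$), so $T_0$'s contribution to the variance is again $O(1/k)\,\E_i\|\bar A_i\|^2$ plus an $O(1/k^2)$ term. Collecting the pieces and using $\|\bar A_i\|^2 = \sum_r (\bar A^{(r)}_i)^2$ yields the stated bound (the constant-factor slack from Cauchy--Schwarz on the swapped term being absorbed into an absolute constant). The only delicate points are the bookkeeping of how many index-quadruples fall in each pattern and the cancellation argument for $T_0$; everything else is routine.
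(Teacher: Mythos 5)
Your proposal is correct in substance and shares the paper's overall skeleton (expand the squared norm over ordered index quadruples and classify by coincidence pattern: your cases (a), (b), (c) are the paper's $I_2$, $I_3$, $I_1$), but it diverges on the one genuinely delicate point, namely the all-distinct quadruples. The paper disposes of that case by flattening $A^{(r)}$ into a vector $v$ of off-diagonal entries with $\sum_p v_p = 0$ and invoking \lemref{lem:worepSamp} to conclude the term is non-positive; however, because $\pi$ is injective, the sum $\E_\pi A_{\pi(i),\pi(j)}A_{\pi(s),\pi(t)}$ for four distinct slots ranges only over products $A_{a,b}A_{c,d}$ with $\{a,b\}\cap\{c,d\}=\emptyset$, not over all pairs of distinct off-diagonal positions, so it is not literally proportional to $\sum_{p\neq q}v_p v_q$ and need not be non-positive. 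Your alternative --- expanding $0=\|\E_{i\neq j}g_{i,j}\|^2$ by the same pattern decomposition over $[m]$ and solving for $T_0$, which damps it by $1/m\le 1/k$ into the $\E_i\|\bar A_i\|^2$ and $\E\|g\|^2$ buckets --- is a sound replacement and arguably the more honest way to close this case. You are also more careful than the paper about the swapped pair $(i',j')=(j,i)$ and the cross sub-patterns of one shared slot (which the paper's partition of $I$ silently omits), and you use an exact identity for the one-shared-index averages where the paper uses the \lemref{lem:worepSamp} inequality; both work. The only cost of your route is that the first term of the bound comes out as an absolute constant times $\frac{1}{k^2-k}\E_{i\neq j}\|\nabla\ell_{i,j}(z)-\nabla L(z)\|^2$ rather than with coefficient exactly one --- you should state this explicitly rather than absorbing it silently, though since the theorem's second term already carries an unspecified $O(1/k)$ constant and the paper's own accounting of $I$ is incomplete, this does not change the content of Theorem~\ref{thm:mainVarianceMultibatch}.
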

The proof of the theorem is given in Section~\ref{sec:proofMain}.  The
bound on the variance of the multibatch method is comprised of two
terms. The first term is the variance of the vanilla estimate (based
on a single pair) divided by $k^2 - k$. That is, for the first term we
obtain a decrease of factor order of $k^2$. This is much better than
the decrease of factor $k/2$ if we take $k/2$ pairs. It is easy to
verify that the second term is at most $O(1/k)$ times the variance of
the vanilla estimate. However, in most cases, we expect
$\bar{A}^{(r)}_i$ to be close to zero. Intuitively, if for most of the
$i$'s, the expected value of $\nabla \ell_{i,j}(z)$ over $j \neq i$ is
roughly the expected value of $\nabla \ell_{i,j}(z)$ over all pairs
$i \neq j$, then the second term in the bound will be close to zero.
We note that one can artificially construct $z$ for which the variance
is truly order of $1/k$, but such vectors correspond to bad metrics,
and in practice, as long as the learning process succeeds, we observe
a variance of order $1/k^2$.

\section{Experiments on Face Recognition}
\label{sec:arch}

Face recognition is an ideal test bench for similarity learning as it
incorporates the need to map the input image into an invariant
``signature'' while meeting very high performance bars.  Top scores on
public benchmarks come mostly from large corporations with access to
extensive resources both in the amount of training data and computing
infrastructure. We demonstrate below that with the multibatch training
algorithm we can accomplish near state-of-the-art precision using a
fraction of the training set size, a fraction of the training time and
a fraction of the prediction running time of top performing
networks. We describe below the method for preparing a training set of
$2.6M$ face images of $12k$ different identities, our method for
incorporating face alignment into the process and the architecture of
our convolutional network.

The leading approach for face recognition comprises of two steps.
After detection, the first step is face alignment where the face is
rectified to a fixed position and scale.  This aligned image is then
fed to the second step which computes a signature.  Signatures are
compared using a standard distance function in $\mathbb R^d$ to enable
fast comparison in the open-dictionary.  Figure~\ref{fig:faces-pipe}
shows the network.

% Data collection:
With regard to training sets, the available public datasets are not
sufficiently substantial to allow high performance recognition.  We
follow the approach of~\cite{parkhi15deep} and search online for
images of $12k$ public figures that are not included in the LFW
benchmark~\cite{LFWTech}.  As the search results are roughly sorted by
decreasing reliability, we assume the first $50$ of each identity are
correct and train a binary SVM classifier to separate them from a
random set of $10k$ other faces.  This classifier is then used to pick
correct images out of the average of $500$ images per identity.  We
further leverage on~\cite{parkhi15deep} by using their published
network as a feature and skip any manual filtering.  This produced
$2.6$ million images.  By manually verifying $4,000$ images we found
$1.17\%$ errors.  For the sake of this work, we did not clean the set
further.

% Alignment:
With regard to alignment, previous work
(cf. \cite{Taigman2014deepface}) underscores the importance of image
rectification pre-processing before being fed into a convolutional
network. In \cite{Taigman2014deepface} 3D rectification was proposed
but more modest 2D rectification has been used by other systems as
well (\cite{parkhi15deep, Schroff2015FaceNet}).  Rectification allows
the model to focus on inter-person difference rather than alignment
hence relieving the classifier from requiring an ensemble of networks
which are expensive at runtime.

We achieve the rectification goal through a 2D similarity
transformation using a compact convolutional network with $102K$
parameteres taking $4.8M$ FLOPs which takes around $5$msec run-time on
a single Cortex A9 core.  The network is trained on images of size
$50\plh 50$ with desired output determined by the algorithm of
\cite{kazemi2014} which uses random forests to detect the location of
$68$ facial landmark points. These points were used to compute warping parameters 
(rotation+translation+scale) which are then fed as
ground-truth to train the ConvNet using a warping layer
per~\cite{JaderbergSZK15Spatial}.  The trained alignment network was
used as the initial guess for the complete network described in
Table~\ref{fig:CNNmodel}.

While the code and model for~\cite{kazemi2014} are available online,
there are two advantages in replacing it with a ConvNet.  First, from
an implementation perspective it is natural to spend resources in
optimizing a ConvNet on an embedded system rather than optimizing
multiple algorithms. Second, the amount of run-time memory required by
random forests is quite high for embedded use.  It stands on $96MB$ in
the available implementation --- which is roughly a factor of $1400$
compared to our ConvNet memory requirements.

Next, with regard to ConvNet architecture, the input layer consists of
$112\plh 112$ RGB image and with an output layer of 128 nodes
representing the face ``signature''.  For the choice of network
topology we follow the network-in-network (``NIN'') concept
of~\cite{LinCY13NIN} which was later refined in~\cite{Zeiler2015}.
The basic structure of a NIN consists of spatial convolutions followed
by the same number of $1\plh 1$ convolutions. A typical NIN block is
as follows: $C$ convolution kernels with $3\plh 3$ spatial resolution
which are typically strided followed by ReLU.  The block is then
passed through $C$ $1\plh 1$ convolutions and ReLU.
Table~\ref{fig:CNNmodel} presents the ConvNet architecture consisting
of $1.3M$ parameters while taking $41M$ FLOPs which is roughly
$30$msec runtime on a single Cortex A9 core.

\begin{figure}[t]
  \centering 
  %\subfigure[\small LFW Test error (equal-error-rate)]
  %{\input{figures/mb_vs_pairs_LFW_eer.tex}}
  %\hspace{1cm}
  %\subfigure[\small LFW Accuracy: 10-fold cross validation as defined in~\cite{LFWTech}]
  \input{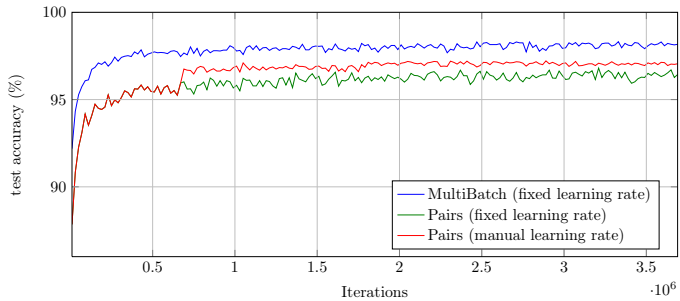}
  \caption{\small Multibatch convergence compared to using pairs.  
    Here we show LWF Accuracy as defined in~\cite{LFWTech} by a 
    10-fold cross validation.
    The speed advantage of multibatch is clear, even when we manually
    reduced the learning rate of SGD tenfold whenever it stopped (at
    $0.7m$ and $1.7m$ iterations) to help it converge faster.  This
    demonstrates how improved gradient estimation accuracy affects
    convergence.}
  \label{fig:convergence-graph}
\end{figure}

Training was done with multibatch, while setting $k=256$. In choosing
the $256$ examples of the mini-batch, we randomly picked $16$
individuals, and randomly picked $16$ images per individual. We
applied Eq.~\ref{eq:pair-loss} as the loss per-pair, while weighting
down the loss of ``not-same'' pairs so that FN and FP have the same
impact, since equal error rate is the standard performance measure on
LFW.  We used learning rate of $0.01$ throughout, and reduced it to
$0.001$ for the last epoch. For SGD we decreased the learning rate
whenever it stopped decreasing in order to help it further.
Figure~\ref{fig:convergence-graph} shows the test accuracy on the LFW
dataset throughout the iterations of the multibatch training method
compared to the standard pairwise SGD approach. One can clearly see
that even with manual tweaking of the learning rate the pairwise SGD
is significantly behind the multibatch algorithm. Both methods
``flatten'' around 1M iterations and further iterations do not help
the pairwise SGD to recover and catchup with multibatch.

\begin{table}
\centering
\tiny
\begin{tabular}{|c|c|c|c|c|c|c|r|}
  \hline
  Method                              & Training size & \# Parameters & FLOPs & Ensemble & Total FLOPs & LFW Accuracy  & Speed      \\
  \hline
  MultiBatch                    & 2.6M          & {\bf 1.3M}    & 41M   & 1        & {\bf 41M}   & {\bf 98.20}\% & $1\plh$    \\
  FaceNet~\cite{Schroff2015FaceNet}   & 260M          & 140M          & 1.6B  & 1        & 1.6B        & 99.63\%       & $40\plh$   \\
 % MultiBatch ``B''                    & 2.6M          & 130M          & 11B   & 1        & 11B         & 98.80\%       & $275\plh$  \\
  DeepFace~\cite{Taigman2014deepface} & 4.4M          & 120M          & 3.8B  & 5        & 19.3B       & 97.35\%       & $4825\plh$ \\
  VGG~\cite{parkhi15deep}             & 2.6M          & 133M          & 11.3B & 30       & 340B        & 99.13\%       & $8500\plh$ \\
%  Baidu~\cite{Baidu2015}              & 1.2M          & ?             & ?     & 9        & ?           & 99.77\%       & ?          \\
  \hline
\end{tabular}
\caption{Comparison between leading published work on face
  recognition sorted by runtime.}
\label{fig:comparison}
\end{table}

\begin{table}  % Alignment+CNN
\centering
\tiny
\begin{tabular}{|l|l|r|r|}
  \hline
  Input Size          & Type                                        & FLOPs & Parameters \\
  \hline
  $50\plh 50\plh 3$   & 4 Convolutions, $5\plh 5$                   & 760K  & 304        \\
  $25\plh 25\plh 4$   & 12 Convolutions, $5\plh 5$                  & 758K  & 2K         \\
  $24\plh 24\plh 12$  & 12 Convolutions, $5\plh 5$                  & 3M    & 4K         \\
  $12\plh 12\plh 12$  & 12 Convolutions, $3\plh 3$                  & 189K  & 2K         \\
  $6\plh 6\plh 12$    & 4 Convolutions, $3\plh 3$                   & 16K   & 436        \\
  $6\plh 6\plh 4$     & Affine+ReLU, 256                            & 38K   & 38K        \\
  $256$               & Affine+ReLU, 64                             & 17K   & 17K        \\
  $64$                & Affine+ReLU, 140                            & 10K   & 10K        \\
  $140$               & Affine+ReLU, 64                             & 10K   & 10K        \\
  $64$                & Affine+ReLU, 128                            & 9K    & 9K         \\
  $128$               & Affine+ReLU, 64                             & 9K    & 9K         \\
  $64$                & Affine+ReLU, 4                              & 260   & 260        \\
  \hline
  $112\plh 112\plh 3$ & Warp                                        & 150k  & 0          \\
  \hline
  $112\plh 112\plh 3$ & NIN with 32 kernels of $5\plh 5$ (stride 2) & 8M    & 30k        \\
  $56\plh 56\plh 32$  & max-pool $2\plh 2$                          & 0     & 0          \\
  $28\plh 28\plh 32$  & NIN with 96 kernels of $3\plh 3$ (stride 2) & 8M    & 121k       \\
  $14\plh 14\plh 96$  & NIN with 128 kernels of $3\plh3$ (stride 2) & 9M    & 165k       \\
  $7\plh 7\plh 128$   & NIN with 128 kernels of $3\plh3$            & 9M    & 165k       \\
  $7\plh 7\plh 128$   & NIN with 128 kernels of $3\plh 3$           & 4M    & 165k       \\
  $7\plh 7\plh 128$   & 128 Convolutions, $3\plh 3$    (stride 2)   & 8M    & 148k       \\
  $4\plh 4\plh 128$   & Affine+ReLU 256                             & 525k  & 525k       \\
  $256$               & Affine+ReLU 256                             & 66k   & 66k        \\
  $256$               & Affine 128                                  & 33k   & 33k        \\
  $128$               & Loss                                        &       &            \\
  \hline
  Total:              &                                             & 41M & 1.3M       \\
  \hline
\end{tabular}
\caption{\small Our ConvNet architecture combines alignment and
  signature generation.  The version here achieves an accuracy of
  98.2\% on LFW and takes 30ms on a Cortex A9 core.  We also trained a
  much larger VGG-like model with $11B$ FLOPs which takes $6.6sec$
  runtime and achieves $98.8\%$ --- an accuracy which appears to be
  limited by the noise in our training data (see text).}
\label{fig:CNNmodel}
\end{table}

The ConvNet presented in Table~\ref{fig:CNNmodel} achieves a precision
of $98.2\%$ on the LFW benchmark with a runtime of 30ms on a single
Cortex A9 ARM core. This result is superior to the 2014 state of the
art achieved by \cite{Taigman2014deepface} and better than human
performance on this dataset.  State of the art today achieves $99\%$
and above but requires network ensembles at runtime, much larger
networks, considerably larger training sets (e.g., $260M$ images used
by \cite{Schroff2015FaceNet}), and elaborate data augmentations such
as multiple crops and resolutions.  We also trained a much larger,
VGG-like, model with $11B$ FLOPs which takes $6.6sec$ runtime and
achieves $98.8\%$ --- an accuracy which appears to be limited by the
noise in our training data.

% As mentioned above, our training set consists of $2.6M$ images and
% we have not pursued any data augmentation techniques. Due to the
% significant acceleration of the multibatch approach, the training
% time on a single ``GeForce GTX TITAN X'' graphics card was merely
% $12$ hours.

%In order to test our training method, we also trained a larger model
%that is big enough to utilize the full information of the downloaded
%data.  For this we increased the number of nodes in each of the
%network layers.  The number of convolutions was increased to:
%$64$,$192$,$384$,$256$,$256$ and the size of the Affine layers was
%increased to: $4096$, $4096$, $128$.  This model is noted as ``B'' in
%Table~\ref{fig:comparison} and has reached $98.8\%$ on LFW, which is
%the limit of the collected data as mentioned above.  There is no doubt
%that the model can perform better given cleaner data.
%

\section{Summary}

The main technical contribution of this work centers around extracting
a much smaller variance of estimating the gradient of the loss
function for SGD for embedded metric learning. We have shown that our
multibatch method introduces a dramatic acceleration of the training
time required to learn an effective embedding for the task of face
recognition. Along the way we have also addressed the issue of why
practitioners tend to devise a classification network for face
recognition rather than training a metric embedding network by showing
that metric embedding is a harder problem to train --- thus
underscoring the importance of introducing the multibatch method.

%This paper presented a practical method for learning a metric
%embedding for ``same-not-same'' setting which were shown to be harder.
%We further leverage on the robustness of the method to train extremely
%small networks that can be used on a mobile device.  When compared to
%the closest work, we used $\frac{1}{100}^\text{th}$ of the data,
%$\frac{1}{100}^\text{th}$ of the training time, and
%$\frac{1}{20}^\text{th}$ of the test time.  These allow any student to
%produce state-of-the-art results without requiring resources that are
%available only to big companies.

{\small
\bibliographystyle{plain}
\bibliography{multibatch}

\begin{thebibliography}{10}

\bibitem{ghadimi2013stochastic}
Saeed Ghadimi and Guanghui Lan.
\newblock Stochastic first-and zeroth-order methods for nonconvex stochastic
  programming.
\newblock {\em SIAM Journal on Optimization}, 23(4):2341--2368, 2013.

\bibitem{LFWTech}
Gary~B. Huang, Manu Ramesh, Tamara Berg, and Erik Learned-Miller.
\newblock Labeled faces in the wild: A database for studying face recognition
  in unconstrained environments.
\newblock Technical Report 07-49, University of Massachusetts, Amherst, October
  2007.

\bibitem{JaderbergSZK15Spatial}
Max Jaderberg, Karen Simonyan, Andrew Zisserman, and Koray Kavukcuoglu.
\newblock Spatial transformer networks.
\newblock {\em NIPS}, pages 2017--2025, 2015.

\bibitem{johnson2013accelerating}
Rie Johnson and Tong Zhang.
\newblock Accelerating stochastic gradient descent using predictive variance
  reduction.
\newblock In {\em Advances in Neural Information Processing Systems}, pages
  315--323, 2013.

\bibitem{kazemi2014}
Vahid Kazemi and Josephine Sullivan.
\newblock One millisecond face alignment with an ensemble of regression trees.
\newblock In {\em Proceedings of the 2014 IEEE Conference on Computer Vision
  and Pattern Recognition}, 2014.

\bibitem{LinCY13NIN}
Min Lin, Qiang Chen, and Shuicheng Yan.
\newblock Network in network.
\newblock {\em CoRR}, abs/1312.4400, 2013.

\bibitem{moulines2011non}
Eric Moulines and Francis~R Bach.
\newblock Non-asymptotic analysis of stochastic approximation algorithms for
  machine learning.
\newblock In {\em Advances in Neural Information Processing Systems}, pages
  451--459, 2011.

\bibitem{needell2014stochastic}
Deanna Needell, Rachel Ward, and Nati Srebro.
\newblock Stochastic gradient descent, weighted sampling, and the randomized
  kaczmarz algorithm.
\newblock In {\em Advances in Neural Information Processing Systems}, pages
  1017--1025, 2014.

\bibitem{parkhi15deep}
O.~M. Parkhi, A.~Vedaldi, and A.~Zisserman.
\newblock Deep face recognition.
\newblock In {\em Proceedings of the British Machine Vision Conference (BMVC)},
  2015.

\bibitem{Schroff2015FaceNet}
Florian Schroff, Dmitry Kalenichenko, and James Philbin.
\newblock Facenet: {A} unified embedding for face recognition and clustering.
\newblock {\em CoRR}, abs/1503.03832, 2015.

\bibitem{shalev2016sdca}
Shai Shalev-Shwartz.
\newblock Sdca without duality, regularization, and individual convexity.
\newblock {\em arXiv preprint arXiv:1602.01582}, 2016.

\bibitem{shalev2004online}
Shai Shalev-Shwartz, Yoram Singer, and Andrew~Y Ng.
\newblock Online and batch learning of pseudo-metrics.
\newblock In {\em Proceedings of the twenty-first international conference on
  Machine learning}, page~94. ACM, 2004.

\bibitem{shalev2013stochastic}
Shai Shalev-Shwartz and Tong Zhang.
\newblock Stochastic dual coordinate ascent methods for regularized loss.
\newblock {\em The Journal of Machine Learning Research}, 14(1):567--599, 2013.

\bibitem{Taigman2014deepface}
Yaniv Taigman, Ming Yang, MarcAurelio Ranzato, and Lior Wolf.
\newblock Deepface: Closing the gap to human-level performance in face
  verification.
\newblock In {\em Conference on Computer Vision and Pattern Recognition
  (CVPR)}, 2014.

\bibitem{xing2003distance}
Eric~P Xing, Andrew~Y Ng, Michael~I Jordan, and Stuart Russell.
\newblock Distance metric learning with application to clustering with
  side-information.
\newblock {\em Advances in neural information processing systems}, 15:505--512,
  2003.

\bibitem{Zeiler2015}
Matthew~D. Zeiler and Rob Fergus.
\newblock Visualizing and understanding convolutional networks.
\newblock {\em ECCV 2014}, 2014.

\bibitem{zhao2014accelerating}
Peilin Zhao and Tong Zhang.
\newblock Accelerating minibatch stochastic gradient descent using stratified
  sampling.
\newblock {\em arXiv preprint arXiv:1405.3080}, 2014.

\end{thebibliography}
}
\newpage
\appendix
\section{Appendix: Proof of \thmref{thm:mainVarianceMultibatch}} \label{sec:proofMain}
We first show that the estimate is unbiased. Indeed, for every
$i \neq j$ we can rewrite $L(z)$ as $\E_\pi\ell_{\pi(i),\pi(j)}(z)$. Therefore,
\[
L(z) = \frac{1}{k^2-k} \sum_{i \neq j \in [k]} L(z) =  \frac{1}{k^2-k} \sum_{i \neq j \in [k]} \E_\pi
\ell_{\pi(i),\pi(j)}(z) = \E_\pi L_\pi(z) ~,
\]
which proves that the multibatch estimate is unbiased.

Next, we turn to analyze the variance of the multibatch estimate. let
$I \subset [k]^4$ be all the indices $i,j,s,t$ s.t.
$i\neq j, s \neq t$, and we partition $I$ to $I_1 \cup I_2 \cup I_3$,
where $I_1$ is the set where $i=s$ and $j=t$, $I_2$ is when all
indices are different, and $I_3$ is when $i=s$ and $j \neq t$ or
$i \neq s$ and $j=t$. Then:
\begin{align*}
&\E_\pi \| \nabla L_\pi(z) - \nabla L(z)\|^2 = \frac{1}{(k^2-k)^2} \E_\pi
\sum_{(i,j,s,t) \in I} (\nabla \ell_{\pi(i),\pi(j)}(z) - \nabla
L(z)) \cdot  (\nabla \ell_{\pi(s),\pi(t)}(z) - \nabla
L(z))  \\
&= \sum_{r=1}^d \frac{1}{(k^2-k)^2} \sum_{q=1}^3 \sum_{(i,j,s,t) \in I_q}\E_\pi
  (\nabla_r \ell_{\pi(i),\pi(j)}(z) - \nabla_r
L(z))\, (\nabla_r \ell_{\pi(s),\pi(t)}(z) - \nabla_r
L(z))
\end{align*}
For every $r$, denote by $A^{(r)}$ the matrix with
$A^{(r)}_{i,j} = \nabla_r \ell_{i,j}(z) - \nabla_r
L(z)$. Observe that for every $r$, $\E_{i \neq j} A^{(r)}_{i,j} = 0$,
and that
\[
\sum_r \E_{i \neq j} (A^{(r)}_{i,j})^2 = \E_{i \neq j} \| \nabla \ell_{i,j}(z)
- \nabla L(z)\|^2.
\]

Therefore,
\[
\E_\pi \| \nabla L_\pi(z) - \nabla L(z)\|^2 = \sum_{r=1}^d
\frac{1}{(k^2-k)^2} \sum_{q=1}^3 \sum_{(i,j,s,t) \in I_q}\E_\pi A^{(r)}_{\pi(i),\pi(j)} A^{(r)}_{\pi(s),\pi(t)}
\]
Let us momentarily fix $r$ and omit the superscript from $A^{(r)}$.
We consider the value of $\E_\pi
  A_{\pi(i),\pi(j)} A_{\pi(s),\pi(t)}  $ according to the value of $q$.
\begin{itemize}
\item For $q=1$: we obtain $\E_\pi A_{\pi(i),\pi(j)}^2$ which is the variance
of the random variable $\nabla_r \ell_{i,j}(z) - \nabla_r L(z)$.

\item For $q=2$: When we fix $i,j,s,t$ which are all different, and
  take expectation over $\pi$, then all products of off-diagonal
  elements of $A$ appear the same number of times in $\E_\pi
  A_{\pi(i),\pi(j)} A_{\pi(s),\pi(t)} $. Therefore, this quantity is
  proportional to $\sum_{p \neq r} v_p v_r$, where $v$ is the vector
  of all non-diagonal entries of $A$. Since $\sum_p v_p = 0$,
  we obtain (using \lemref{lem:worepSamp}) that $\sum_{p \neq r} v_p
  v_r \le 0$, which means that the entire sum for this case is
  non-positive.

\item For $q=3$: Let us consider the case when $i=s$ and $j \neq t$,
  and the derivation for the case when $i \neq s$ and $j=t$ is
  analogous. The expression we obtain is $\E_\pi A_{\pi(i),\pi(j)}
  A_{\pi(i),\pi(t)}$.  This is like first sampling a row and then
  sampling, without replacement, two indices from the row (while not
  allowing to take the diagonal element).  So, we can rewrite the
  expression as:
  \begin{equation}
    \begin{split}
      \E_\pi
      A_{\pi(i),\pi(j)} A_{\pi(s),\pi(t)}  = &
      \E_{i \sim [m]} \E_{j,t \in [m] \setminus \{i\} : j \neq t} A_{i,j}
      A_{i,t} \\
      &~\le~ \E_{i \sim [m]} \left( \E_{j \neq i} A_{i,j} \right)^2 = \E_{i \sim [m]} (\bar{A}_i)^2 ~,
    \end{split}
  \end{equation}
where we denote $\bar{A}_i = \E_{j \neq i} A_{i,j}$ and in the
inequality we used again \lemref{lem:worepSamp}.
\end{itemize}
Finally, the bound on the variance follows by observing that the
number of summands in $I_1$ is $k^2-k$ and the number of summands in
$I_3$ is $O(k^3)$. This concludes our proof.

\begin{lemma} \label{lem:worepSamp}
Let $v \in \reals^n$ be any vector. Then,
\[
\E_{s \neq t} [v_s v_t] \le (\E_i[v_i])^2
\]
In particular, if $\E_i[v_i]=0$ then $\sum_{s \neq t} v_s v_t \le 0$.
\end{lemma}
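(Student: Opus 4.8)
The plan is to reduce the claim to the elementary fact that a variance is nonnegative, i.e.\ to Jensen's inequality $(\E_i[v_i])^2 \le \E_i[v_i^2]$. Write $\mu = \E_i[v_i] = \frac{1}{n}\sum_{i=1}^n v_i$ (and note we may assume $n \ge 2$, since otherwise the set $\{(s,t): s\neq t\}$ is empty and there is nothing to prove). The starting point is the algebraic identity
\[
\left(\sum_{i=1}^n v_i\right)^2 = \sum_{i=1}^n v_i^2 + \sum_{s \neq t} v_s v_t ~,
\]
which just expands the square and separates the diagonal terms $s=t$ from the off-diagonal terms $s \neq t$.

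Next I would solve for the off-diagonal sum and divide by $n(n-1)$, the number of ordered pairs with $s\neq t$, to get
\[
\E_{s \neq t}[v_s v_t] = \frac{1}{n(n-1)}\left( n^2\mu^2 - \sum_{i=1}^n v_i^2\right)
= \frac{1}{n-1}\left( n\mu^2 - \textstyle\sum_i \tfrac{1}{1}\, \right)\!,
\]
more cleanly: $\E_{s\neq t}[v_s v_t] = \frac{n^2\mu^2 - n\,\E_i[v_i^2]}{n(n-1)} = \frac{n\mu^2 - \E_i[v_i^2]}{n-1}$. Then the desired inequality $\E_{s\neq t}[v_s v_t] \le \mu^2$ is equivalent, after multiplying by $n-1>0$ and rearranging, to $n\mu^2 - \E_i[v_i^2] \le (n-1)\mu^2$, i.e.\ to $\mu^2 \le \E_i[v_i^2]$. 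This last inequality is exactly Jensen's inequality applied to the convex function $x \mapsto x^2$ (equivalently, $\E_i[(v_i-\mu)^2] \ge 0$), so it holds for every vector $v$.

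Finally, the ``in particular'' clause is immediate: if $\mu = \E_i[v_i] = 0$, then the inequality just proved gives $\E_{s\neq t}[v_s v_t] \le 0$, and since this expectation is $\frac{1}{n(n-1)}\sum_{s\neq t} v_s v_t$ with $n(n-1) > 0$, we conclude $\sum_{s\neq t} v_s v_t \le 0$. There is no real obstacle here — the only things to be careful about are the bookkeeping of the $n(n-1)$ normalization and the sign flip when dividing, and recording the harmless assumption $n\ge 2$; the content is entirely the nonnegativity of the variance.
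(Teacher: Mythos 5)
Your proof is correct and follows essentially the same route as the paper's: both separate the diagonal from the off-diagonal terms of the full double sum, normalize by $n^2-n$, and reduce the claim to $(\E_i[v_i])^2 \le \E_i[v_i^2]$, i.e.\ nonnegativity of the variance. Aside from one garbled intermediate display (which you immediately restate cleanly) and the harmless explicit remark that $n \ge 2$, there is nothing to add.
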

\begin{proof}
For simplicity, we use $\E[v]$ for $\E_i[v_i]$ and $\E[v^2]$ for $\E_i
[v_i^2]$. Then:
\begin{align*}
\E_{s \neq t} v_s v_t &= \frac{1}{n^2-n} \sum_{s=1}^n \sum_{t=1}^n v_s v_t -
\frac{1}{n^2-n}  \sum_{s=1}^n v_s^2 \\
&=  \frac{1}{n^2-n} \sum_{s=1}^n v_s \sum_{t=1}^n v_t -
\frac{1}{n^2-n} \sum_{s=1}^n v_s^2 \\
&= \frac{n^2}{n^2-n} \E[v]^2 - \frac{n}{n^2-n}\E[v^2] \\
&= \frac{n}{n^2-n} (\E[v]^2-\E[v^2]) + \frac{n^2-n}{n^2-n} \E[v]^2 \\
&\le 0 + \E[v]^2
\end{align*}
\end{proof}

\end{document}